\let\proof\@undefined
\let\endproof\@undefined
\newtheorem{theorem}{Theorem}[section]
\newtheorem{proposition}[theorem]{Proposition}
\newtheorem{definition}[theorem]{Definition}
\newtheorem{remark}[theorem]{Remark}
\newtheorem{example}[theorem]{Example}
\newtheorem{problem}[theorem]{Problem}
\newcommand{\exampler}[2]{\hskip -\marginparsep {\bf Example #1 Revisited.} {\it #2}}
\newcommand{\RMB}{\mathrm{B}}
\newcommand{\RMC}{\mathrm{C}}
\newcommand{\RMR}{\mathrm{R}}
\newcommand{\RMT}{\mathrm{T}}
\newcommand{\RMW}{\mathrm{W}}
\newcommand{\CF}{\mathcal{F}}
\newcommand{\CL}{\mathcal{L}}
\newcommand{\CQ}{\mathcal{Q}}
\newcommand{\CS}{\mathcal{S}}
\newcommand{\BFB}{\mathbf{B}}
\newcommand{\BFR}{\mathbf{R}}
\newcommand{\BFT}{\mathbf{T}}
\newcommand{\BBR}{\mathbb{R}}
\newcommand{\BBT}{\mathbb{T}}
\newcommand{\project}[1]{\! \upharpoonright_{#1}}
\newcommand{\andltl}{\wedge}
\newcommand{\Next}{\mathbf{X}}
\newcommand{\Always}{\mathbf{G}}
\newcommand{\Event}{\mathbf{F}}
\newcommand{\Until}{\mathcal{U}}
\newcommand{\Implies}{\Rightarrow}
\newcommand{\Not}{\lnot}
\newcommand{\prop}{\alpha}
\newcommand{\opt}{\pi}
\newcommand{\optrun}{\textsc{Optimal-Run}\ }
\newcommand{\constR}{\textsc{Obtain-Region-Automaton}\ }
\newcommand{\serializeR}{\textsc{Serialize-Region-Automaton}\ }
\newcommand{\ie}{{\it i.e.},\;}
\newcommand\oprocendsymbol{\hbox{$\square$}}
\newcommand\oprocend{\relax\ifmmode\else\unskip\hfill\fi\oprocendsymbol}
\title{\LARGE \bf Robust Multi-Robot Optimal Path Planning with\\Temporal Logic Constraints}
\author{Alphan Ulusoy$^\star$%
 		\quad Stephen L. Smith$^\dagger$%
 		\quad Xu Chu Ding$^\ddagger$%
 		\quad Calin Belta$^\star$%
 		\thanks{This work was supported in part by ONR-MURI N00014-09-1051, ARO W911NF-09-1-0088, AFOSR YIP FA9550-09-1-020, NSF CNS-0834260, NSF CNS-1035588 and NSERC.}%
 		\thanks{$^\star$ Hybrid and Networked Systems Laboratory, Boston University, Boston, MA 02215 (\ulusoy, \belta)}%
 		\thanks{$^\dagger$ Dept. of Electrical and Computer Engineering, University of Waterloo, Waterloo ON, N2L 3G1 Canada (\smith)}
		\thanks{$^\ddagger$ Embedded Systems and Networks, United Technologies Research Center, East Hartford, CT 06108 (\ding)}}%
\begin{document}

\maketitle
\thispagestyle{empty}
\pagestyle{empty}

\begin{abstract}
In this paper we present a method for automatically planning robust optimal paths for a group of robots that satisfy a common high level mission specification. 
Each robot's motion in the environment is modeled as a weighted transition system, and the mission is given as a Linear Temporal Logic (LTL) formula over a set of propositions satisfied by the regions of the environment.
In addition, an optimizing proposition must repeatedly be satisfied.
The goal is to minimize the maximum time between satisfying instances of the optimizing proposition while ensuring that the LTL formula is satisfied even with uncertainty in the robots' traveling times.
We characterize a class of LTL formulas that are robust to robot timing errors, for which we generate optimal paths if no timing errors are present, and we present bounds on the deviation from the optimal values in the presence of errors.
We implement and experimentally evaluate our method considering a persistent monitoring task in a road network environment.
\end{abstract}

\section{Introduction\label{sec:introduction}} 
The classic motion planning problem considers missions where a robot must reach a goal state from an initial state while avoiding obstacles.
Temporal logics, on the other hand, provide a powerful high-level language for specifying complex missions for groups of robots~\cite{Loizou04, HKG-GEF-GJP:09, TW-UT-RMM:10, AB-LEK-MYV:10, LB-OS-JC-KG-SL:11}.
Their power lies in the wealth of tools from model checking~\cite{VW86, Holzmann97}, which can be leveraged to generate robot paths satisfying desired mission specifications.
Alternatively, if the mission cannot be satisfied, the tools can be used produce a certificate, or counter-example, which proves that the mission is not possible.
However, in robotics the goal is typically to plan paths that not only complete a desired mission, but which do so in an optimal manner.
In our earlier work~\cite{SLS-JT-CB-DR:10b-IJRR} we considered Linear Temporal Logic (LTL) specifications, and a particular form of cost function, and provided a method for computing optimal robot paths for a single robot.
We then extended this approach to multi-robot problems by utilizing timed automata~\cite{iros2011}.

The main difficulty in moving from a single robot to multiple robots is in synchronizing the motion of the robots, or in allowing the robots to move asynchronously.
In~\cite{MK-CB:08}, the authors propose a method for decentralized motion of multiple robots by restricting the robots to take transitions (\ie travel along edges in the graph) synchronously.
Once every robot has completed a transition, the robots can synchronously make the next transition.
While such an approach is effective for satisfying the LTL formula, it does not lend itself to optimizing the robot motion, since robots must spend extra time for synchronization.
In~\cite{iros2011} we approached this problem by describing the motion of the group of robots in the environment as a timed automaton.
This description allowed us to represent the relative position between robots. Such information is necessary for optimizing the robot motion.
After providing a bisimulation~\cite{Milner89} of the infinite-dimensional timed automaton to a finite dimensional transition system we were able to apply our results from~\cite{SLS-JT-CB-DR:10b-IJRR} to compute an optimal run.

However, enabling the asynchronous motion of robots introduces issues in the robustness, and thus implementability of the multi-robot paths.
Timed-automata rely heavily on the assumption that the clocks (or for robots, the speeds), are known exactly.
If the clocks drift by even an infinitesimally small amount, then the reachability analysis developed for timed-automata is no longer correct~\cite{AP:99,AP:00}.
The intuition behind this is that if the robot speeds are not exactly equal to those used for planning, then two robots can complete tasks in a different order than was specified in the plan.
This switch in the order of events may result in the violation of the global mission specification.

In this paper, we address this issue by characterizing a class of LTL formulas that are robust to such timing errors.
For simplicity of presentation, we assume that each robot moves among the vertices of an environment modeled as a graph.
However, by using feedback controllers for facet reachability in polytopes~\cite{HS04} the method developed in this paper can be extended to robots with continuous dynamics traversing an environment with polytopic partitions.
The characterization relies on the concept of trace-closedness of languages, which was first applied in multi-robot planning in~\cite{YC-XCD-CB:11}.
For these languages, we can guarantee that any deviation from the planned order of events due to uncertainties in the speeds of robots will not result in the violation of the global specification.

The contribution of this paper is to present a method for generating paths for a group of robots satisfying general LTL formulas, which are robust to uncertainties in the speeds of robots, and which perform within a known bound of the optimal value.
We focus on minimizing a cost function that captures the maximum time between satisfying instances of an \emph{optimizing proposition}.
The cost is motivated by problems in persistent monitoring and in pickup and delivery problems.
Our solution relies on using the concept of trace-closedness to characterize the class of LTL formulas for which a robust solution exists.
For formulas in this class, we utilize a similar method as in~\cite{iros2011} to generate robot plans.
We then propose periodic synchronization of the robots to optimize the cost function in the presence of timing errors.
We provide results from an implementation on a robotic test-bed, which shows the utility of the approach in practice.

The organization of the paper is as follows. In Section~\ref{sec:preliminaries}, we give some preliminaries in formal methods and trace-closed languages.
In Section~\ref{sec:problem}, we formally state the motion planning problem for a team of robots, and we present our solution in Section~\ref{sec:solution}.
In Section~\ref{sec:experiments}, we present a hardware implementation for a team of robots performing persistent data gathering missions in a road network environment.
Finally, in Section~\ref{sec:conclusions}, we conclude with final remarks.

\section{Preliminaries \label{sec:preliminaries}}

For a set $\Sigma$, we use $|\Sigma|$, $2^\Sigma$, $\Sigma^*$, and $\Sigma^\omega$ to denote its cardinality, power set, set of finite words, and set of infinite words, respectively.
Moreover, we define $\Sigma^\infty = \Sigma^* \cup \Sigma^\omega$ and denote the empty string by $\emptyset$.

\begin{definition}[\bf Transition System]
\label{def:TS}
A (weighted) transition system (TS) is a tuple $\BFT := (\CQ_\RMT, q_\RMT^0, \delta_\RMT, \Pi_\RMT, \CL_\RMT, w_\RMT)$, where
\begin{enumerate}
\item $\CQ_\RMT$ is a finite set of states; %
\item $q_\RMT^0 \in \CQ_\RMT$ is the initial state; %
\item $\delta_\RMT \subseteq \CQ_\RMT \times \CQ_\RMT$ is the transition relation; %
\item $\Pi_\RMT$ is a finite set of atomic propositions (observations); %
\item $\CL_\RMT:\CQ_\RMT\to 2^{\Pi_\RMT}$ is a map giving the set of atomic propositions satisfied in a state; %
\item $w_\RMT: \delta_{T}\to \BBR_{>0}$ is a map that assigns a positive weight to each transition.
\end{enumerate}
\end{definition}

We define a run of $\BFT$ as an infinite sequence of states $r_\RMT = q^0q^1\ldots$ such that $q^0 = q^{0}_{T}$, $q^k \in \CQ_\RMT$ and $(q^k,q^{k+1}) \in \delta_\RMT$ for all $k \geq 0$.
A run generates an infinite word $\omega_\RMT = \CL(q^0)\CL(q^1)\ldots$ where $\CL(q^k)$ is the set of atomic propositions satisfied at state $q^k$.  

\begin{definition}[\bf LTL Formula]
An LTL formula $\phi$ over the atomic propositions $\Pi$ is defined inductively as follows:
\begin{equation*}
\phi ::= \top \mid \prop \mid \phi \lor \phi \mid \phi \andltl \phi \mid \lnot\,\phi \mid \Next\,\phi \mid \phi\,\Until\,\phi
\end{equation*}
where $\top$ is a predicate true in each state of a system, $\prop \in \Pi$ is an atomic proposition, $\neg$ (negation), $\vee$ (disjunction) and $\wedge$ (conjunction) are standard Boolean connectives, and $\Next$ and $\Until$ are temporal operators.
\end{definition}
LTL formulas are interpreted over infinite words (generated by the transition system $\BFT$ from Def.~\ref{def:TS}).
Informally, $\Next\,\prop$ states that at the next position of a word, proposition $\prop$ is true.
The formula $\prop_1\,\Until\,\prop_2$ states that there is a future position of the word when proposition $\prop_2$ is true, and proposition $\prop_1$ is true at least until $\prop_2$ is true.
From these temporal operators we can construct two other temporal operators: Eventually (i.e., future), $\Event$ defined as $\Event\,\phi := \top\,\Until\, \phi$, and Always (i.e., globally), $\Always$, defined as $\Always\,\phi := \lnot\,\Event\,\lnot\,\phi$.
The formula $\Always\,\phi$ states that $\phi$ is true at all positions of the word; the formula $\Event\,\phi$ states that $\phi$ eventually becomes true in the word.
More expressivity can be achieved by combining the temporal and Boolean operators.
We say a run $r_\RMT$ satisfies $\phi$ if and only if the word generated by $r_\RMT$ satisfies $\phi$.

\begin{definition}[\bf B\"uchi Automaton]
\label{def:buchi}
  A B\"uchi automaton is a tuple $\BFB := (\CS_\RMB,\CS_\RMB^0,\Sigma_\RMB,\delta_\RMB,\CF_\RMB)$, consisting of %
\begin{enumerate}
\item a finite set of states $\CS_\RMB$; %
\item a set of initial states $\CS_\RMB^0\subseteq \CS_\RMB$; %
\item an input alphabet $\Sigma_\RMB$; %
\item a non-deterministic transition relation $\delta_\RMB \subseteq \CS_\RMB\times \Sigma_\RMB \times \CS_\RMB$; %
\item a set of accepting (final) states $\CF_\RMB\subseteq \CS_\RMB$.
\end{enumerate}
\end{definition}

A \emph{run} of $\BFB$ over an input word $\omega=\omega^0\omega^1\ldots$ is a sequence $r_\RMB=s^0s^1\ldots$, such that $s^0 \in \CS_\RMB^0$, and $(s^k,\omega^k,s^{k+1}) \in \delta_\RMB$, for all $k\geq 0$.
A B\"{u}chi automaton $\BFB$ accepts a word over $\Sigma_\RMB$ if and only if at least one of the corresponding runs intersects with $\CF_\RMB$ infinitely many times.
For any LTL formula $\phi$ over a set $\Pi$, one can construct a B\"{u}chi automaton with input alphabet $\Sigma_\RMB = 2^{\Pi}$ accepting all and only words over $2^\Pi$ that satisfy $\phi$.

\begin{definition}[\bf Prefix-Suffix Structure]
\label{def:prefix_suffix}
A prefix of a run is a finite path from an initial state to a state $q$.
A periodic suffix is an infinite run originating at the state $q$ reached by the prefix, and periodically repeating a finite path, which we call the suffix cycle, originating and ending at $q$, and containing no other occurrence of $q$.
A run is in prefix-suffix form if it consists of a prefix followed by a periodic suffix.
\end{definition}

\begin{definition}[\bf Language]
The set of all the words accepted by an automaton $\BFB$ is called the language recognized by the automaton and is denoted by $L_\RMB$.
\end{definition}

\begin{definition}[\bf Distribution]
\label{def:dist}
Given a set $\Sigma$, the collection of subsets $\Sigma_i \subseteq \Sigma$, $\forall \; i=1,\ldots,m$ is called a distribution of $\Sigma$ if $\cup_{i=1}^m\Sigma_i = \Sigma$.
\end{definition}

\begin{definition}[\bf Projection]
For a word $\omega \in \Sigma^\infty$ and a subset $\Sigma_i \subseteq \Sigma$, $\omega \project{\Sigma_i}$ denotes the projection of $\omega$ onto $\Sigma_i$, which is obtained by removing all the symbols in $\omega$ that are not in $\Sigma_i$.
For a language $L\subseteq\Sigma^\infty$ and a subset $\Sigma_i \subseteq \Sigma$, $L \project{\Sigma_i}$ denotes the projection of $L$ onto $\Sigma_i$, which is the set of projections of all words in $L$ onto $\Sigma_i$, \ie $\{\omega \project{\Sigma_i} |\ \omega \in L\}$.
\end{definition}

\begin{definition}[\bf Trace-Closed Language]
\label{def:trace_closed_lang}
Given the distribution $\{\Sigma_1,\ldots,\Sigma_m\}$ of $\Sigma$ and the words $\omega, \omega' \in \Sigma^\infty$, $\omega'$ is trace-equivalent to $\omega$, denoted $\omega' \sim \omega$, iff their projections onto each one of the subsets in the given distribution are equal, \ie $\omega \project{\Sigma_i} = \omega' \project{\Sigma_i}$ for each $i=1,\ldots,m$.
For $\{\Sigma_1,\ldots,\Sigma_m\}$, the trace-equivalence class of $\omega$ is given by $[\omega] =\{\omega' \in \Sigma^\infty\;|\;\omega'\project{\Sigma_i} = \omega\project{\Sigma_i} \forall i=1,\ldots,m\}$.
Finally, a trace-closed language over $\{\Sigma_1,\ldots,\Sigma_m\}$ is a language $L$ such that $[\omega] \subseteq L,\,\forall\ \omega \in L$.
\end{definition}

\section{Problem Formulation and Approach \label{sec:problem}}

In this section we introduce the multi-robot path planning problem with temporal constraints, and we motivate the need for solutions that are robust to uncertain robot speeds.  

\subsection{Environment Model and Initial Formulation} 

Let
\begin{equation}\label{eqn:graph}
\mathcal{E}=(V,\rightarrow_\mathcal{E})
\end{equation}
be a graph, where $V$ is the set of vertices and $\rightarrow_{\mathcal{E}}\subseteq V\times V$ is the set of edges.
In this paper, $\mathcal{E}$ is the quotient graph of a partitioned environment, where $V$ is a set of labels for the regions in the partition and $\rightarrow_{\mathcal{E}}$ is the corresponding adjacency relation.
For example, $V$ can be a set of labels for the roads, intersections, and buildings in an urban-like environment and $\rightarrow_\mathcal{E}$ gives their connections (see Fig.~\ref{fig:road_network}). 

Consider a team of $m$ robots moving in an environment modeled by $\mathcal E$.
The motion capabilities of robot $i \in \{1,\ldots,m\}$ are represented by a TS $\BFT_i = (\CQ_i,q_i^0,\delta_i,\Pi_i,\CL_i,w_i)$, where $\CQ_{i}\subseteq V$; $q_i^0$ is the initial vertex of robot $i$; $\delta_i\subseteq \rightarrow_{\mathcal E}$ is a relation modeling the capability of robot $i$ to move among the vertices; $\Pi_i$ is the subset of propositions $\Pi$ assigned to the environment that can be satisfied by robot $i$ such that $\{\Pi_1,\ldots,\Pi_m\}$ is a distribution of $\Pi$; $\CL_i$ is a mapping from $\CQ_i$ to $2^{\Pi_i}$ showing how the propositions are satisfied at vertices; $w_i(q,q')$ captures the time for robot $i$ to go from vertex $q$ to $q'$, which we assume to be an integer.
In this robotic model, robot $i$ travels along the edges of $\BFT_{i}$, and spends zero time on the vertices.
We assume that the robots are equipped with motion primitives which allow them to move from $q$ to $q'$ for each $(q,q')\in \delta_i$.

In our previous work~\cite{iros2011} we considered the case where there is an atomic proposition $\opt \in \Pi$, called the \emph{optimizing proposition}, and a multi-robot task specified by an LTL formula of the form
\begin{equation}
\label{eqn:general_formula_old}
\phi:=\varphi \land \Always\Event\opt,
\end{equation}
where $\varphi$ can be any LTL formula over $\Pi$, and $\Always\Event\opt$ specifies that proposition $\opt$ must be satisfied infinitely often.
As an example, in a persistent data gathering task, $\opt$ may be assigned to regions where data is uploaded, \ie $\pi=\mathtt{Upload}$, while $\varphi$ can be used to specify rules (such as traffic rules) that must be obeyed at all times during the task~\cite{SLS-JT-CB-DR:10b-IJRR}.  

Our goal in~\cite{iros2011} was to plan multi-robot paths that satisfy $\phi$ and minimize the maximum time between satisfying instances of $\pi$.  In data gathering, this corresponds to minimizing the maximum time between data uploads.
To state this problem formally, we assume that each run $r_i = q_i^0q_i^1\ldots$ of $\BFT_i$ (robot $i$) starts at $t=0$ and generates a word $\omega_i = \omega_i^0\omega_i^1\ldots$ and a corresponding sequence of time instances $\BBT_i := t_i^0t_i^1\ldots$ such that the $k^{th}$ symbol $\omega_i^k = \CL_i(q_i^k)$ is satisfied at time $t_i^k$.
Note that, as robots spend zero time on the vertices, each $\omega_i^k$ has a unique $t_i^k$ which is the instant when robot $i$ visits the corresponding vertex.
To define the behavior of the team as a whole, we consider the sequences $\BBT_{i}$ as sets and take the union $\bigcup_{i=1}^{m} \BBT_{i}$ and order this set in ascending order to obtain $\BBT := t^0t^1,\ldots$.
Then, we define $\omega_{team} = \omega_{team}^0\omega_{team}^1\ldots$ to be the word generated by the team of robots where the $k^{th}$ symbol $\omega_{team}^k$ is the union of all propositions satisfied at time $t^k$.
Finally, we define the infinite sequence $\BBT^\opt = \BBT^\opt(1),\BBT^\opt(2),\ldots$ where $\BBT^\opt(k)$ stands for the time instance when the optimizing proposition $\opt$ is satisfied for the $k^{th}$ time by the team.
Thus, the problem is that of synthesizing individual optimal runs for a team of robots so that $\omega_{team}$ satisfies $\phi$ and $\BBT^\pi$ minimizes
\begin{equation}
\label{eqn:cost_function}
J(\BBT^{\opt})=\limsup_{k\to+\infty}\left(\mathbb{T}^{\opt}(k+1) - \mathbb{T}^{\opt}(k)\right).
\end{equation}

Since we consider LTL formulas containing $\Always\Event \pi$, this optimization problem is always well-posed.

\subsection{Robustness and Optimality in the Field}
In this paper, we are interested in the implementability of our previous approach in the case where our model is not exact in the weights of transitions.
Particularly, we consider the case where the actual value of $w_i(q,q')$ that is observed during deployment, denoted by $\tilde w_i(q,q')$, is a non-deterministic quantity that lies in the interval $[(1-\rho_i)w_i(q,q'), (1+\rho_i)w_i(q,q')]$ where $\rho_i$ is the \emph{deviation value} of robot $i$ which is assumed to be known a priori.
In the following, we use the expression ``\emph{in the field}'' to refer to the model with uncertain traveling times, and use $x$ and $\tilde x$ to denote the planned and actual values of some variable $x$.

The question becomes, if we use the runs generated from our previous approach in the field, will the formula $\phi$ still be satisfied?  
Given the word $\omega_{team}$ that characterizes the planned run of the robotic team and the distribution $\{\Pi_1,\ldots,\Pi_m\}$, the \emph{actual} word $\tilde\omega_{team}$ generated by the robotic team during its infinite asynchronous run in the field will be one of the trace equivalents of $\omega_{team}$, \ie  $\tilde\omega_{team} \in [\omega_{team}]$ due to the uncertainties in the traveling times of the robots.
This leads to the definition of critical words.

\begin{definition}[\bf Critical Words] 
\label{def:critical_words} 
Given the language $L_\RMB$ of the B\"uchi automaton that corresponds to the LTL formula $\phi$ over $\Pi$, and given a distribution of $\Pi$, we define the word $\omega$ over $\Pi$ to be a \emph{critical word} if $\exists\;\tilde\omega \in [\omega]$ such that $\tilde\omega \not\in L_\RMB$.  
\end{definition}

Thus, we see that if the planned word is critical, then we may not satisfy the specification in the field.
This can be formalized by noting that the optimal runs that satisfy \eqref{eqn:general_formula_old} are always in a prefix-suffix form \cite{SLS-JT-CB-DR:10b}, where the suffix cycle is repeated infinitely often.
Using this observation and Def.~\ref{def:critical_words} we can formally define the words that can violate the LTL formula during the deployment of the robotic team.

\begin{proposition}
\label{prp:critical_suffix}
If the suffix cycle of the word $\omega_{team}$ is a critical word, then the correctness of the motion of the robotic team during its deployment cannot be guaranteed.
\end{proposition}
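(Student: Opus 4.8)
The plan is to prove the statement by exhibiting, under the hypothesis, a single admissible execution of the team in the field whose generated word is rejected by $\BFB$ --- this already suffices to show that correctness ``cannot be guaranteed'', since only one trace of the deployed team need violate $\phi$. The argument uses two facts recalled just above the statement: optimal runs satisfying~\eqref{eqn:general_formula_old} are in prefix-suffix form, so $\omega_{team}=\omega_{pre}\,(\omega_{suf})^\omega$ with suffix cycle $\omega_{suf}$; and every word produced in the field satisfies $\tilde\omega_{team}\in[\omega_{team}]$. The crux is the converse direction at the level of the suffix: criticality of $\omega_{suf}$ must force one of the rejecting trace equivalents in $[\omega_{team}]$ to be \emph{physically realizable} for some admissible choice of the perturbed weights.

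First I would write $\omega_{team}$ in prefix-suffix form and, from the hypothesis that $\omega_{suf}$ is critical (Def.~\ref{def:critical_words}), extract a reordering $\tilde\omega_{suf}\sim\omega_{suf}$ under the distribution $\{\Pi_1,\ldots,\Pi_m\}$ that witnesses the violation. Since projection commutes with (infinite) concatenation, $\bigl(\tilde\omega_{suf}\project{\Pi_i}\bigr)^\omega=\bigl(\omega_{suf}\project{\Pi_i}\bigr)^\omega$, so the infinite word $\tilde\omega_{team}:=\omega_{pre}\,(\tilde\omega_{suf})^\omega$ lies in $[\omega_{team}]$; I would then argue that, with the offending reordering recurring in infinitely many repetitions of the suffix, $\tilde\omega_{team}\notin L_\RMB$, i.e. it violates $\phi$. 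The last step is to construct, repetition by repetition, perturbed weights $\tilde w_i(q,q')\in[(1-\rho_i)w_i(q,q'),(1+\rho_i)w_i(q,q')]$ under which the robots interleave their vertex visits exactly in the order prescribed by $\tilde\omega_{team}$; the word generated by that field run is then $\tilde\omega_{team}\notin L_\RMB$, and the claim follows.

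The main obstacle is this realizability step --- turning a combinatorial reordering into an admissible schedule of speeds. Two observations carry it. First, several propositions may become true at the same planned instant $t^k$ (recall $\omega_{team}^k$ is the union of propositions satisfied at $t^k$), and such a simultaneous block can be arranged in any order by an arbitrarily small weight perturbation, so some reorderings are attainable for every $\rho_i>0$. Second, and more decisively, because the suffix cycle repeats infinitely the relative timing errors between robots accumulate without bound: by holding robot $i$'s weights near the top of its interval and the others' near the bottom, the phase of robot $i$ relative to the rest can be pushed past any prescribed amount, so every interleaving consistent with trace-equivalence of the suffix pattern is eventually --- and, by continuing, repeatedly --- realized in the field. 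This infinitary aspect is also why the statement is phrased in terms of the suffix rather than the prefix: Büchi acceptance is an infinitary condition, so a single perturbed repetition among infinitely many nominal ones need not yield rejection, whereas a reordering made to recur in infinitely many copies of the suffix cycle (or to induce a permanent violation) does force $\tilde\omega_{team}\notin L_\RMB$.
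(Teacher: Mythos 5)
Your core mechanism is the same as the paper's: the violation is realizable for every deviation $\rho>0$ because timing errors accumulate without bound over infinitely many repetitions of the suffix cycle, while the nominal gap between the two events to be swapped stays bounded by the cycle duration. The paper makes this quantitative by picking a single pair of symbols $\omega_{team}^k$, $\omega_{team}^{k+\tau}$ in the suffix cycle whose swap violates $\phi$, observing that the swap occurs once $(1+\epsilon)t^k > (1-\epsilon)t^{k+\tau}$, and noting that the required deviation $(t^{k+\tau}-t^k)/(t^k+t^{k+\tau})$ tends to $0$ as $k\to\infty$. Your ``more decisive'' second observation is exactly this argument, and your remark about recurrence is a genuine refinement: the paper silently treats one swap as enough to leave $L_\RMB$, whereas for a B\"uchi condition one should, as you do, either make the offending reordering recur in infinitely many copies of the cycle or argue that it induces a permanent violation.

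The step I would push back on is the universal realizability claim: ``every interleaving consistent with trace-equivalence of the suffix pattern is eventually realized.'' That is false as stated. What unbounded drift buys you, per repetition, is an essentially arbitrary rigid phase offset for each robot plus local distortions of relative size $\rho$; it cannot realize a reordering that imposes contradictory constraints on the relative offset of the same pair of robots within one cycle. For instance, if robot $1$ produces events at local times $1$ and $5$ and robot $2$ at local times $2$ and $4$, demanding that robot $1$'s first event follow robot $2$'s first while robot $1$'s second precedes robot $2$'s second requires the pairwise offset to be simultaneously greater than $1$ and less than $-1$. Since your argument takes the violating word $\tilde\omega_{suf}$ handed to you by Def.~\ref{def:critical_words} and then asserts its realizability via this universal claim, there is a gap: that particular reordering might not be schedulable. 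The repair is essentially the paper's move --- reduce to a single transposition of two independent symbols that already forces exit from $L_\RMB$, which the drift argument always realizes --- or, equivalently, argue that the physically realizable subset of $[\omega_{team}]$ already contains a rejected word. The conclusion and the essential idea are right; only this realizability step needs tightening.
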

\begin{proof}
We denote the actual word generated by the robotic team in the field by $\tilde\omega_{team}$ whereas $\omega_{team}$ stands for the planned word.
Suppose that for each robot $\rho_i=\epsilon$, and in the suffix cycle we have $\omega_{team}^k$ and $\omega_{team}^{(k+\tau)}$ generated by robots $i$ and $j$ at positions $k$ and $k+\tau$ that must not be swapped, because if they do $\tilde\omega_{team}$ violates $\phi$.
Note that we are guaranteed to find such symbols as we assume the suffix cycle to be a critical word.
In the worst-case, for the symbols to swap, we must have $(1 + \epsilon)t^k > (1-\epsilon)t^{k+\tau}$.
Solving for $\epsilon$, we get $\epsilon > (t^{k+\tau}-t^k)/(t^k+t^{k+\tau})$.
However, as the suffix is an infinite repetition of the suffix cycle, $\lim_{k\rightarrow\infty}(t^{k+\tau}-t^k)/(t^k+t^{k+\tau}) = 0$ and $\phi$ is violated for any $\epsilon>0$. 
\end{proof}

In addition, we can consider the performance of the team during deployment in terms of the value of the cost function \eqref{eqn:cost_function} observed in the field.
Using the same arguments presented in Prop.~\ref{prp:critical_suffix} it can be easily show that, the worst-case field value of \eqref{eqn:cost_function} will be the minimum of $(\tilde J_1, \ldots, \tilde J_m)$ where $\tilde J_i$ is the maximum duration between any two successive satisfactions of $\pi$ by robot i in the field.
This effectively means that there is no benefit in executing the task with multiple robots, as at some point in the future the overall performance of the team will be limited by that of a single member.

\subsection{Robust Problem Formulation}

To characterize the field performance of the robotic team and to limit the deviation from the optimal run during deployment, we propose to use a synchronization protocol where robots can synchronize with each other only when they are at the vertices of the environment.
We assume that there is an atomic proposition $\mathtt{Sync} \in \Pi$, called the \emph{synchronizing proposition}, and we consider multi-robot tasks specified using LTL formulas of the form
\begin{equation}
\label{eqn:general_formula}
\phi_{sync}:=\varphi\andltl\Always\Event\opt\andltl\Always\Event\mathtt{Sync},
\end{equation}
where $\varphi$ can be any LTL formula over $\Pi$, $\pi$ is the optimizing proposition and $\mathtt{Sync}$ is the special synchronizing proposition that is satisfied only when all members of the robotic team occupy vertices at the same time.
We can now formulate the problem.
\begin{problem}
\label{prb:problem}
Given a team of $m$ robots modeled as transition systems $\BFT_{i}$, $i\!=\!1,\ldots,m$, and an LTL formula $\phi_{sync}$ over $\Pi$ in the form \eqref{eqn:general_formula}, synthesize individual runs $r_i$ for each robot such that $\BBT^\pi$ minimizes the cost function \eqref{eqn:cost_function}, and $\tilde\omega_{team}$, \ie the word observed in the field, satisfies $\phi_{sync}$.
\end{problem}

Note that the runs produced by a solution to Prob. \ref{prb:problem} are guaranteed not to violate $\phi_{sync}$ even if there is a mismatch between the weights $w_i(q,q')$ used for the solution of the problem and the actual traveling times observed in the field.
Since $\tilde\omega_{team}$ observed in the field is likely to be sub-optimal, we will also seek to bound the deviation from optimality in the field.

\subsection{Solution Outline}
In \cite{iros2011}, we showed that the joint behavior of a robotic team can be captured by a region automaton.
A region automaton, as defined next, is a finite dimensional transition system that captures the relative positions of the members of the robotic team. 
This information is then used for computing optimal trajectories.
\begin{definition}[\bf Region Automaton]
\label{def:RA}
The region automaton $\BFR$ is a TS (Def.~\ref{def:TS}) $\BFR := (\CQ_\RMR,q_\RMR^0,\delta_\RMR, \Pi_\RMR, \CL_\RMR, w_\RMR)$, where
 	\begin{enumerate}
 	\item $\CQ_\RMR$ is the set of states of the form $(q,r)$ such that
	\begin{enumerate}
	\item $q$ is a tuple of state pairs $(q_1q_1',\ldots,q_mq_m')$ where the $i^{th}$ element $q_iq_i'$ is a source-target state pair from $\CQ_i$ of $\BFT_i$ meaning robot $i$ is currently on its way from $q_i$ to $q_i'$, and
	\item $r$ is a tuple of clock values $(x_1,\ldots,x_m)$ where the $i^{th}$ element denotes the time elapsed since robot $i$ left state $q_i$.
	\end{enumerate}
 	\item $q_\RMR^0$ is the initial state that has zero-weight transitions to all those states in $\CQ_\RMR$ with $r=(0,\ldots,0)$ and $q=(q_1^0q_1',\ldots,q_m^0q_m')$ such that $q_i^0$ is the initial state of $\BFT_i$ and $(q_i^0,q_i') \in \delta_i$.
 	\item $\delta_\RMR$ is the transition relation such that a transition from $(q,r)$ to $(q',r')$ exists if and only if
	\begin{enumerate}
	\item $(q_i,q_i'),(q_i',q_i'') \in \delta_i$ for all changed state pairs where the $i^{th}$ element $q_iq_i'$ in $q$ changes to $q_i'q_i''$ in $q'$,
	\item $w_i(q_i,q_i') - x_i$ of all changed state pairs are equal to each other and are strictly smaller than those of unchanged state pairs, and
	\item for all changed state pairs corresponding $x_i'$ in $r'$ becomes $x_i'=0$ and all other clock values in $r$ are incremented by $w_i(q_i,q_i')-x_i$ in $r'$.
	\end{enumerate}
	\item $\Pi_\RMR=\cup_{i=1}^m\Pi_i$ is the set of propositions;
	\item $\CL_\RMR:\CQ_\RMR\to 2^{\Pi_\RMR}$ is a map giving the set of atomic propositions satisfied in a state. For a state with $q=(q_1q_1',\ldots,q_mq_m')$, $\CL_\RMR((q,r)) = \cup_{i=1}^m\CL_i(q_i)$;
	\item $w_\RMR: \delta_{R}\to \BBR_{\geq0}$ is a map that assigns a non-negative weight to each transition such that $w_\RMR((q,r),(q',r'))=w_i(q_i,q_i')-x_i$ for each state pair that has changed from $q_iq_i'$ to $q_i'q_i''$ with a corresponding clock value of $x_i'=0$ in $r'$.
 	\end{enumerate}
\end{definition}

\begin{example}
Fig.~\ref{fig:ra} illustrates the region automaton $\BFR$ that corresponds to the robots modeled with $\BFT_1$ and $\BFT_2$ given in Fig.~\ref{fig:simple_ts}.
There is a transition from $((ba,bc),(0,0))$ to $((ba,cb),(1,0))$ with weight $1$ in $\BFR$ because $(b,c)\in\delta_2$, $w_2(b,c)=1$, and $w_1(b,a)\not=1$.
\end{example}

Our solution to Problem~\ref{prb:problem} can be outlined as follows:
\begin{enumerate}
\item We check if the LTL formula $\phi_{sync}$ is trace-closed guaranteeing that it will not be violated in the field (See Sec. \ref{sec:sub:dist}); 
\item We prepare the serialized region automaton of the robotic team with synchronization points by modifying the output of our earlier algorithm \constR \cite{iros2011} (See Sec. \ref{sec:sub:ra}); 
\item We find optimal runs on individual $\BFT_is$ using the \optrun algorithm we previously developed in \cite{SLS-JT-CB-DR:10b} and use a synchronization protocol to calculate an upper bound on the cost function \eqref{eqn:cost_function} for given deviation values to obtain the solution to Prob. \ref{prb:problem} (See Sec. \ref{sec:sub:soln}).
\end{enumerate}

\section{Problem Solution \label{sec:solution}}
In this section, we explain each step of the solution to Prob.~\ref{prb:problem} in detail.
In the following, we use a simple example to illustrate ideas as we develop the theory for the general case.
We present an experimental evaluation of our approach considering a more realistic scenario in Sec. \ref{sec:experiments}.

\subsection{Trace-Closedness of the Original Formula \label{sec:sub:dist}}

Prop.~\ref{prp:trace_closed_formula} shows how trace-closedness of $\phi_{sync}$ guarantees correctness in the field.
In the following, we say an LTL formula $\phi_{sync}$ is trace-closed if the language $L_\RMB$ of the corresponding B\"uchi automaton is trace-closed in the sense of Def.~\ref{def:trace_closed_lang}.

\begin{proposition}
\label{prp:trace_closed_formula}
If the general specification $\phi_{sync}$ is a trace-closed formula with respect to the distribution given by the robots' capabilities, then it will not be violated in the field due to uncertainties in the speeds of the robots.
\end{proposition}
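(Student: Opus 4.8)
The plan is to argue directly from the definitions, connecting the field behavior of the team to the trace-equivalence class of the planned word. First I would recall the key fact established in the robustness discussion preceding Def.~\ref{def:critical_words}: because the $m$ robots move asynchronously with traveling times $\tilde w_i(q,q') \in [(1-\rho_i)w_i(q,q'),(1+\rho_i)w_i(q,q')]$, but each robot $i$ still traverses the same sequence of vertices prescribed by its planned run $r_i$, the projection $\tilde\omega_{team}\project{\Pi_i}$ onto robot $i$'s propositions is unchanged from $\omega_{team}\project{\Pi_i}$ — only the interleaving of symbols contributed by different robots can change in the field. Since $\{\Pi_1,\ldots,\Pi_m\}$ is a distribution of $\Pi$ and each proposition is tied to exactly the robot(s) able to satisfy it, this says precisely that $\tilde\omega_{team}$ and $\omega_{team}$ have identical projections onto every $\Pi_i$, i.e. $\tilde\omega_{team}\in[\omega_{team}]$ in the sense of Def.~\ref{def:trace_closed_lang}.

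Next I would use the hypothesis that $\phi_{sync}$ is trace-closed, which by the stipulation just above the proposition means the language $L_\RMB$ of the B\"uchi automaton for $\phi_{sync}$ is a trace-closed language over $\{\Pi_1,\ldots,\Pi_m\}$. Since the planned team run satisfies $\phi_{sync}$ by construction (this is what the planning algorithm of \cite{iros2011}, as re-used here, guarantees), we have $\omega_{team}\in L_\RMB$. Trace-closedness of $L_\RMB$ then gives $[\omega_{team}]\subseteq L_\RMB$. Combining this with $\tilde\omega_{team}\in[\omega_{team}]$ from the previous step yields $\tilde\omega_{team}\in L_\RMB$, and since $L_\RMB$ is exactly the set of words satisfying $\phi_{sync}$, the actual word observed in the field satisfies $\phi_{sync}$. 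Equivalently, in the language of Def.~\ref{def:critical_words}: if $\phi_{sync}$ is trace-closed then no word in $L_\RMB$ is critical, so in particular the suffix cycle of $\omega_{team}$ is not critical, and by (the contrapositive of) Prop.~\ref{prp:critical_suffix} correctness is preserved.

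The one point that requires care — and which I expect to be the main obstacle in making the argument fully rigorous — is the claim $\tilde\omega_{team}\in[\omega_{team}]$, i.e. that timing perturbations can only reorder symbols across robots and never delete, duplicate, or locally reshuffle a single robot's symbols. This rests on the modeling assumptions that robots spend zero time at vertices (so each vertex visit produces a unique, well-defined symbol), that every $\tilde w_i$ stays strictly positive (so the order of a single robot's own events is preserved), and on a tie-breaking convention for the case where two robots reach vertices at exactly the same field time (any resulting word is still a trace-equivalent of $\omega_{team}$, so the conclusion is unaffected). I would state these as standing assumptions and then the rest of the proof is the short chain of inclusions above.
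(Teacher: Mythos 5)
Your proposal is correct and follows essentially the same route as the paper's proof: the planned word lies in $L_\RMB$, trace-closedness gives $[\omega_{team}]\subseteq L_\RMB$, and the field word lies in $[\omega_{team}]$, hence in $L_\RMB$. The only difference is that you explicitly justify the step $\tilde\omega_{team}\in[\omega_{team}]$ (and flag the modeling assumptions it rests on), whereas the paper simply asserts this in the discussion preceding Def.~\ref{def:critical_words} and phrases the rest via critical words; this is a presentational refinement, not a different argument.
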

\begin{proof}
From Defs.~\ref{def:trace_closed_lang} and \ref{def:critical_words}, we know that if we can find a run that satisfies a trace-closed LTL formula, then the word $\omega_{team}$ produced by the run will not be a critical word.
Since $\omega_{team}$ is not a critical word, $\not\exists$ $\tilde\omega_{team}\in[\omega_{team}]$ such that $\tilde\omega_{team}\not\in L_\RMB$.
Thus, regardless of the $\rho_i$ values of the robots, $\phi$ will not be violated in the field due to robot timing errors as any $\tilde\omega_{team}\in[\omega_{team}]$ will also be in $L_\RMB$.
\end{proof}

Thus, in order to guarantee correctness in the field, we first check that $\phi_{sync}$ is trace-closed using an algorithm adapted from~\cite{DP-TW-PW}.
However, as trace-closedness is not well-defined for words over $2^\Pi$, we construct a B\"uchi automaton whose language $L_\RMB$ is over the set $\Pi$.

\begin{example}
\label{exp:running_example}
Fig.~\ref{fig:simple_ts} illustrates the environment where two robots are expected to satisfy a task given by a formula in the form of \eqref{eqn:general_formula} where $\varphi = \Always \Event \mathtt{r1P} \andltl \Always \Event \mathtt{r2P}$, $\Pi_1=\{\mathtt{r1P},\,\pi,\mathtt{Sync}\}$, $ \Pi_2=\{\mathtt{r2P},\,\pi,\mathtt{Sync}\}$, and $\Pi = \{\mathtt{r1P},\,\mathtt{r2P},\,\pi,\,\mathtt{Sync}\}$.
\end{example}

\begin{figure}[h]
	\centering
	\subfloat[][]{\includegraphics[width=0.19\linewidth]{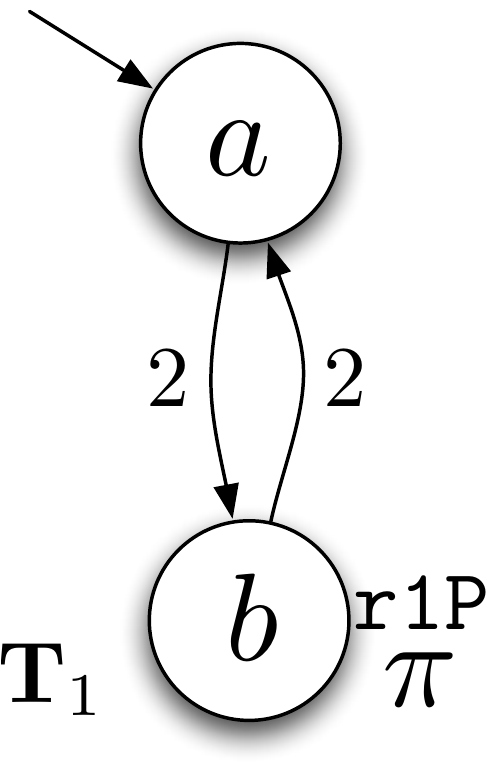}\label{fig:sub:ts_1}}\hspace{0.2in}
	\subfloat[][]{\includegraphics[width=0.3\linewidth]{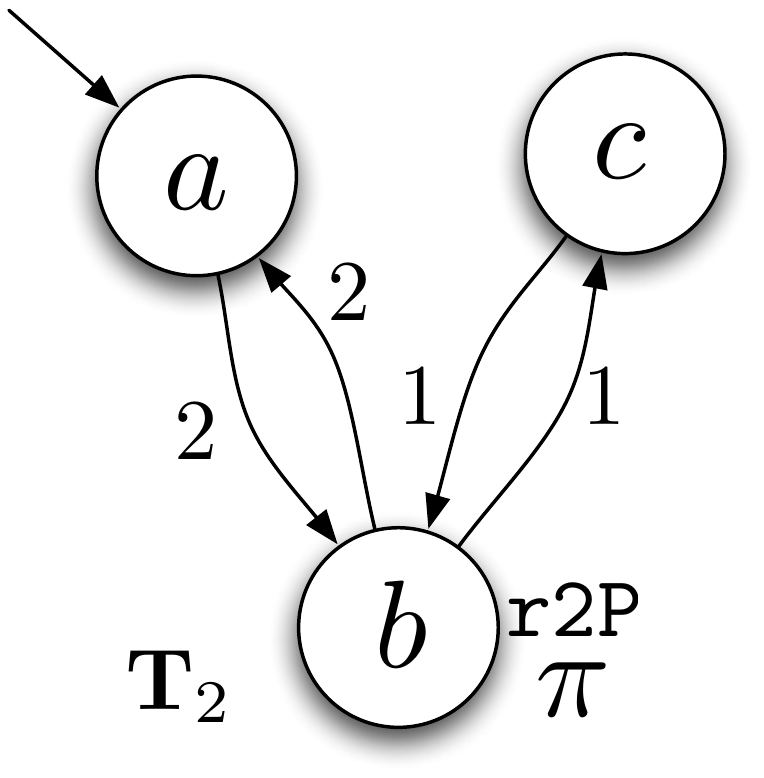}\label{fig:sub:ts_2}}
	\caption{TS's $\BFT_1$ and $\BFT_2$ of two robots in an environment with three vertices. The states correspond to vertices $\{a,b,c\}$, the edges represent the motion capabilities of each robot, and the weights represent the traveling times between any two vertices. The propositions $\mathtt{r1P, r2P}$ and $\pi$ are shown next to the vertices where they can be satisfied by the robots.}
	\label{fig:simple_ts}
\end{figure}

After checking that $\phi_{sync}$ is trace-closed, we proceed by obtaining the serialized region automaton with synchronization points where the $\mathtt{Sync}$ proposition is satisfied.

\subsection{Obtaining the Serialized Region Automaton with Synchronization Points \label{sec:sub:ra}}

\begin{figure}[h]
	\centering
	\includegraphics[width=0.95\linewidth]{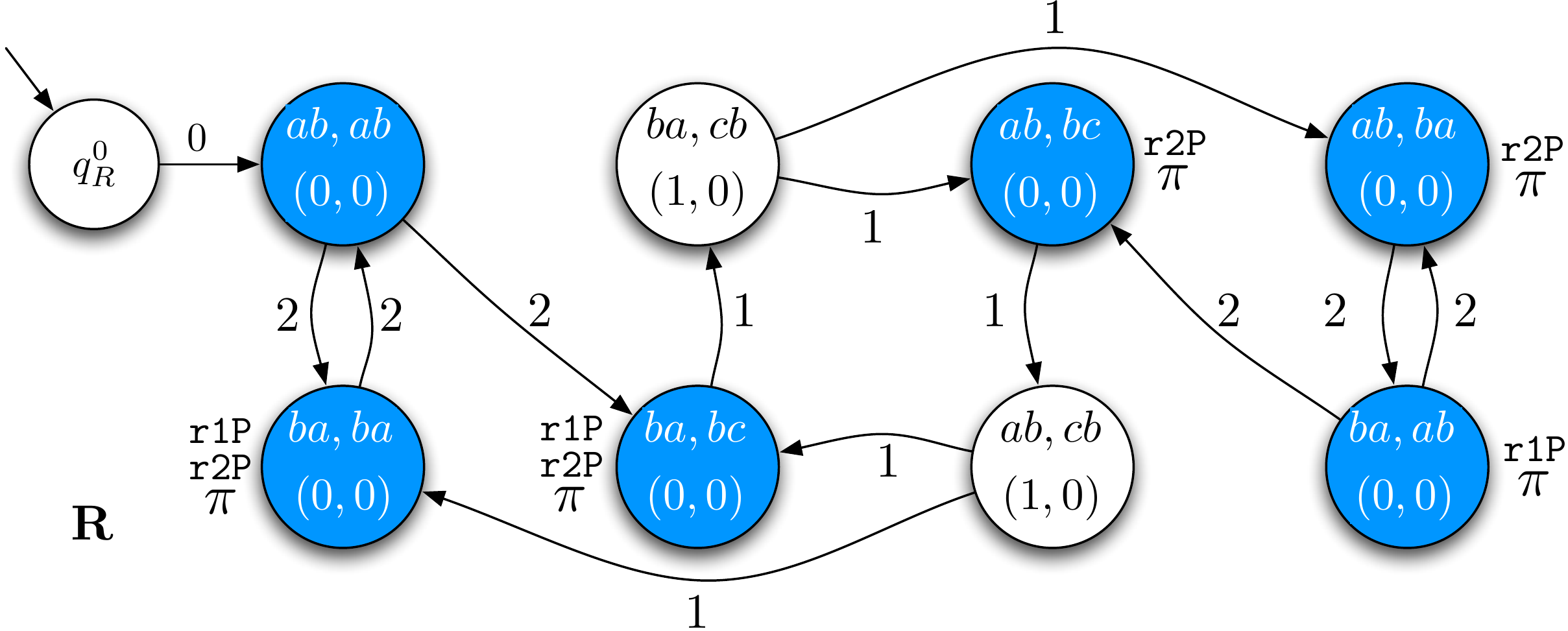}
	\caption{Region automaton obtained using \constR \cite{iros2011} that captures the joint behavior of the robotic team given in Fig.~\ref{fig:simple_ts}. $\mathtt{Sync}$ states where all robots occupy vertices, \ie states with all zero clock values, are highlighted in blue.}
	\label{fig:ra}
\end{figure}

If $\phi_{sync}$ is a trace-closed formula, we obtain the region automaton that captures the joint behavior of the robotic team using \constR \cite{iros2011}. 
Next, using Alg.~\ref{alg:serializeR}, we first introduce synchronization states by adding the special $\mathtt{Sync}$ proposition to the states where all robots occupy some vertex in their TS's simultaneously, \ie states with $r=(0,\ldots,0)$.
Note that, these are the states that will be used to calculate a bound on optimality when the robots are deployed in the field.
We then expand the states where multiple propositions are satisfied simultaneously to obtain $\BFR_{ser}$ where at most one proposition is satisfied at each state.
This ensures that languages of both the B\"uchi automaton that corresponds to $\phi_{sync}$ and $\BFR_{ser}$ are over $\Pi$.

\vskip 5pt
\exampler{\ref{exp:running_example}}{%
Fig.~\ref{fig:ra} illustrates the region automaton $\BFR$ that captures the joint behavior of the team given in Fig.~\ref{fig:simple_ts}.
The serialized region automaton with synchronization points $\BFR_{ser}$ that corresponds to $\BFR$ is given in Fig.~\ref{fig:ra_ser_sync}.%
}

\begin{figure}[h]
	\centering
	\includegraphics[width=1\linewidth]{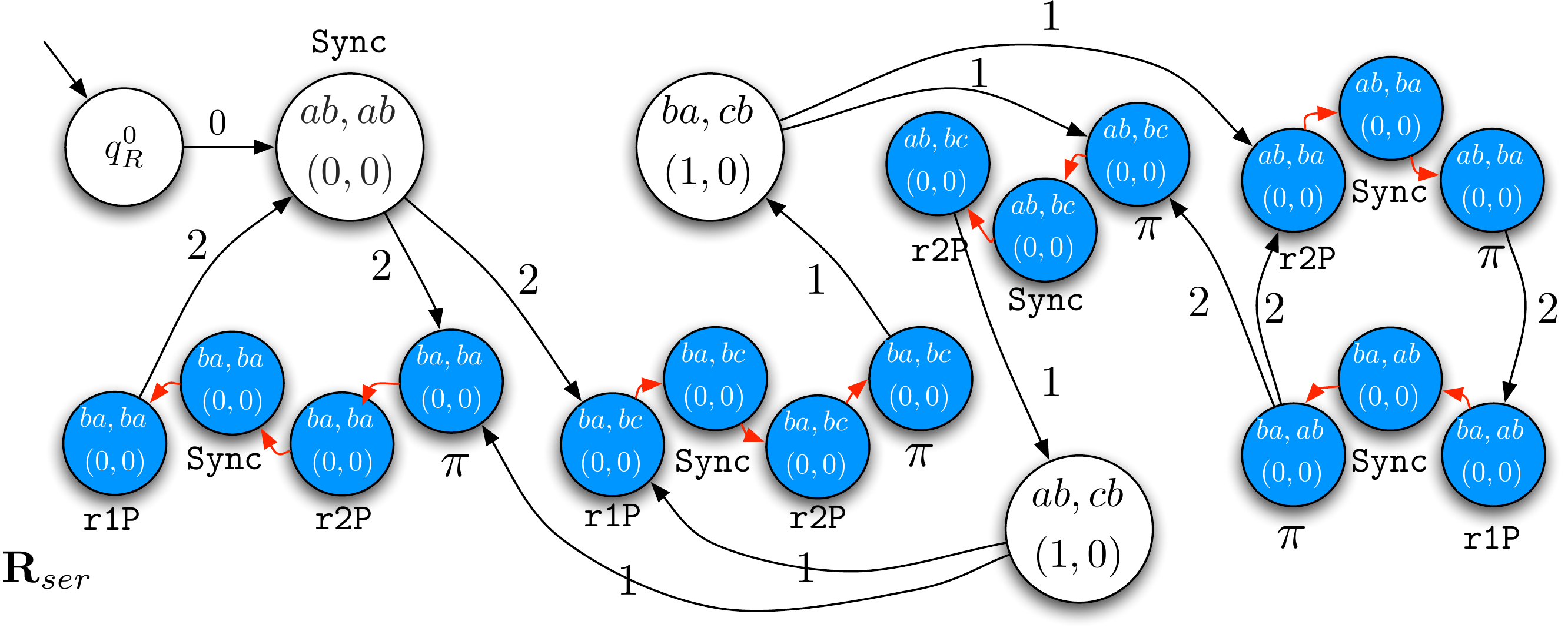}
	\caption{Serialized region automaton with synchronization states obtained after applying Alg.~\ref{alg:serializeR} to $\BFR$ in Fig.~\ref{fig:ra}. New states introduced after serialization are highlighted in blue. Red arrows stand for zero-weight transitions.}
	\label{fig:ra_ser_sync}
\end{figure}

\begin{algorithm}[h]
\DontPrintSemicolon
\SetInd{0.5em}{0.5em}
\KwIn{A region automaton $\BFR$ obtained using \constR.}
\KwOut{$\BFR_{ser}$, the serialized region automaton with synchronization states, such that at most one proposition is satisfied at each state.}
\BlankLine
\ForEach {State $\{q,r\}$ in $\BFR$} {
	\If {$r = (0,\ldots,0)$} {
		Add $\mathtt{Sync}$ to propositions satisfied in $\{q,r\}$.\;
	}
	$k \longleftarrow$ Number of propositions satisfied in $\{q,r\}$.\;
	\If {$k>1$} {
		$propsTuple \leftarrow$ The tuple $(\mathtt{p_1},\ldots,\mathtt{p_k})$ of propositions satisfied in $\{q,r\}$.\;
		Copy $\{q,r\}$ $k$ times to obtain $\{q,r\}'_1,\ldots,\{q,r\}'_k$.\;
		\ForEach {$i=1,\ldots,k$} {
			$\CL(\{q,r\}'_i) \leftarrow propsTuple[i]$.\;
			\If {$i<k$} {
				Add $\{q,r\}'_i \rightarrow \{q,r\}'_{i+1}$ to $\delta_R$ with zero weight.\;
			}
		}
		Re-direct all incoming transitions of $\{q,r\}$ to $\{q,r\}'_1$.\;
		Originate all outgoing transitions of $\{q,r\}$ from $\{q,r\}'_k$.\;
		Remove $\{q,r\}$ from $\CQ_R$.\;
	}
}
\caption{{\sc Serialize-Region-Automaton}\label{alg:serializeR}}
\end{algorithm}

\begin{remark}
Since $\phi_{sync}$ is trace-closed, the serialization can be done in any order.
Since all possible orderings belong to the same trace-equivalent class, they do not affect the satisfaction of the formula.
\end{remark}

\subsection{Finding the Robust Optimal Run and the Optimality Bound\label{sec:sub:soln}}
After obtaining the serialized region automaton $\BFR_{ser}$, we find an optimal run $r_\RMR^\star$ on $\BFR_{ser}$ that minimizes the cost function \eqref{eqn:cost_function} using our earlier \optrun algorithm \cite{SLS-JT-CB-DR:10b}.
The optimal run $r_\RMR^\star$ is always in a prefix-suffix form (Def.~\ref{def:prefix_suffix}).
Furthermore, as $r_\RMR^\star$ satisfies $\phi_{sync}$, it has at least one synchronization point in its suffix cycle, which we assume to start with a synchronization point.

\begin{definition}[\bf Projection of a run on $\BFR$ to $\BFT_{i}$s]
\label{def:projection_of_runs}
Given $\BBT$ and the corresponding run $r_\RMR$ on $\BFR_{ser}$ where 
\begin{multline*}
r_\RMR=((q^{0}_{1}q^{1}_{1},\ldots,q^{0}_{m}q^{1}_{m}),(x_1^0,\ldots,x_m^0))\\((q^{1}_{1}q^{2}_{1},\ldots,q^{1}_{m}q^{2}_{m}),(x_1^1,\ldots,x_m^1))\ldots,
\end{multline*}
we define its projection on $\BFT_{i}$ as run $r_{i}=q^{0}_{i}q^{1}_{i}\ldots$ for all $i=1,\ldots,m$, where $q^{k}_{i}$ only appears in $r_{i}$ if $x_i^k=0$ and $\BBT(k) \not = \BBT(k+1)$.
\end{definition}

In \cite{iros2011} we show that the individual runs $r_i$ obtained by the projection in Def.~\ref{def:projection_of_runs} are equivalent to the region automaton run $r_\RMR$ in the sense that they produce the same word $\omega_{team}$.
Using Def.~\ref{def:projection_of_runs}, we project the optimal run $r_\RMR^\star$ to individual $\BFT_i$s to obtain the set of optimal individual runs $\{r_1^\star,\ldots,r_m^\star\}$.
As the robots execute their infinite runs in the field, they synchronize with each other at the synchronization point following the protocol given in Alg.~\ref{alg:syncrun} ensuring that they start each new suffix cycle in a synchronized way.
Using this protocol, we can define a bound on optimality, \ie the value of the cost function \eqref{eqn:cost_function} observed in the field, as given in the following proposition.

\begin{algorithm} 
\DontPrintSemicolon 
\SetInd{0.5em}{0.5em}
\KwIn{A run $r_k$ of robot $k$ in the prefix-suffix form with at least one synchronization point in its suffix cycle.}
\Begin{
	$syncPoint \leftarrow$ First synchronization point in the suffix.\;
	$teamFlags \leftarrow (0,\ldots,0)$.\;
	\While {True} {
		\If {$syncMessage$ received from robot $i$} {
			$teamFlags[i] \leftarrow 1$.\;	
		}
		\If {$currentState = syncPoint$} {
			Stop\;
			Broadcast $syncMessage$.\;
			$teamFlags[k] \leftarrow 1$.\;	
		}
		\If {$teamFlags = (1,\ldots,1)$} {
			$teamFlags \leftarrow (0,\ldots,0)$.\;
			Continue executing $r_k$.\;
		}
	}
}
\caption{{\sc Sync-Run}\label{alg:syncrun}}
\end{algorithm}

\begin{proposition}
\label{prp:bound_on_opt}
Suppose that each robot's deviation value is bounded by $\rho >0$ (i.e., $\rho_i \leq \rho$ for all robots $i$), and let $J(\BBT^\pi)$ be the cost of the planned robot paths.
Then, if the robots follow the protocol given in Alg.~\ref{alg:syncrun} the field value of the cost satisfies
\[
\overline{J(\BBT^\pi)} \leq J(\BBT^\pi) + \rho(J(\BBT^\pi)+ 2 d_s),
\]
where $d_s$ is the planned duration of the suffix cycle.
\end{proposition}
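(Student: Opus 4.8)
The plan is to track the $\pi$-satisfaction instants of the suffix in the field, using the synchronization points as anchors, and to compare them gap-by-gap with the planned instants. First I would fix notation: since $r_\RMR^\star$ is in prefix--suffix form with the suffix cycle starting at a synchronization point, the planned instants at which $\pi$ holds are, after the finite prefix, periodic; within one suffix cycle they occur at offsets $0\le o_1<\dots<o_p<d_s$ measured from the cycle-start synchronization point, and, because the $\limsup$ in \eqref{eqn:cost_function} only sees the periodic part, $J(\BBT^\pi)$ equals the largest of the consecutive differences $o_{k+1}-o_k$ and the wrap-around difference $(d_s-o_p)+o_1$; write $J:=J(\BBT^\pi)$ for this quantity. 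I would then record two facts about execution in the field. (i) Between two successive visits to the synchronization point each robot travels continuously around the suffix cycle, and since both endpoints of the cycle are states with all clocks zero, each robot's planned travel time for the cycle is exactly $d_s$; hence in the field robot $i$ needs time in $[(1-\rho_i)d_s,(1+\rho_i)d_s]\subseteq[(1-\rho)d_s,(1+\rho)d_s]$, so that, by Alg.~\ref{alg:syncrun}, if $\tau_n$ is the field time of the $n$-th visit to the synchronization point (in the suffix) then $\tau_{n+1}-\tau_n=\max_i\tilde d_i^{(n)}\in[(1-\rho)d_s,(1+\rho)d_s]$. (ii) A $\pi$-instant that the plan places at offset $o_k$ inside the $n$-th suffix cycle is produced by some robot travelling continuously from $\tau_n$, so in the field it occurs at a time in $[\tau_n+(1-\rho)o_k,\ \tau_n+(1+\rho)o_k]$.

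With these facts I would bound the two types of planned gaps. (a) A gap between offsets $o_k$ and $o_{k+1}$ of the same cycle: by (ii) the realized gap is at most $(1+\rho)o_{k+1}-(1-\rho)o_k=(o_{k+1}-o_k)+\rho(o_{k+1}+o_k)\le J+2\rho d_s$, using $o_k,o_{k+1}<d_s$. (b) The gap straddling a synchronization point, from the last $\pi$-instant of cycle $n$ (offset $o_p$) to the first $\pi$-instant of cycle $n+1$ (offset $o_1$): by (ii) and (i) it is realized no later than $\tau_{n+1}+(1+\rho)o_1\le\tau_n+(1+\rho)(d_s+o_1)$ and no earlier than $\tau_n+(1-\rho)o_p$, so the realized gap is at most $(1+\rho)(d_s+o_1)-(1-\rho)o_p=(d_s-o_p+o_1)+\rho(d_s+o_1+o_p)$; since $d_s-o_p+o_1\le J$ (the wrap difference) and $d_s+o_1+o_p=(d_s-o_p+o_1)+2o_p\le J+2d_s$, this is at most $J+\rho(J+2d_s)$. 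Case (a) is also $\le J+\rho(J+2d_s)$, so every realized inter-$\pi$ gap in the suffix is at most $J+\rho(J+2d_s)$, and taking the $\limsup$ gives $\overline{J(\BBT^\pi)}\le J(\BBT^\pi)+\rho(J(\BBT^\pi)+2d_s)$.

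The step that requires the most care — and the principal obstacle — is that, because the robots run asynchronously between consecutive synchronization points, the realized temporal order of the $p$ $\pi$-instants within a suffix cycle (and even across adjacent cycles) may differ from the planned order, so I must argue that reordering cannot produce a gap larger than those in cases (a)--(b). I would handle this with a rearrangement argument: letting $u_1\le\dots\le u_p$ be the sorted realized times of the cycle-$n$ instants, fact (ii) forces $u_k\in[\tau_n+(1-\rho)o_k,\ \tau_n+(1+\rho)o_k]$, since the $k$ earliest realized times are each at most $\tau_n+(1+\rho)o_k$ and the $p-k+1$ latest are each at least $\tau_n+(1-\rho)o_k$; the case-(a) bound then applies verbatim to $u_{k+1}-u_k$, and the analogous inequalities at the cycle boundary, together with $\tau_{n+1}-\tau_n\le(1+\rho)d_s$, bound the gap from the last realized instant of cycle $n$ to the first of cycle $n+1$ by the case-(b) expression. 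It then remains to check that the intervals so produced cover the whole suffix timeline, so that every realized inter-$\pi$ gap in the suffix lies inside one of them; this covering/ordering bookkeeping (together with the trivial remark that the finite prefix does not affect the $\limsup$) is the only non-routine point, and it is where I would concentrate the effort.
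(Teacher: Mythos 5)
Your proposal is correct and follows essentially the same route as the paper's proof: anchor the analysis at the synchronization points, bound each realized satisfaction time of $\pi$ within $[(1-\rho)o_k,\,(1+\rho)o_k]$ of the cycle start (the paper's $\underline{T^i}$, $\overline{T^i}$), bound the start of the next cycle by $(1+\rho)d_s$ (the paper's $t^w$), and run the same algebra on the within-cycle and wrap-around gaps. The only substantive difference is that your sorting/rearrangement argument explicitly justifies why a reordering of the realized events cannot enlarge the consecutive-gap bound --- a point the paper's proof passes over by simply declaring $\overline{T^{i+1}}-\underline{T^i}$ to be the worst case.
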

\begin{proof}
In the following, we take the suffix to begin at a synchronization point.
The suffix consists of an infinite number of repetitions of the suffix cycle, which we denote $S_{\mathrm{c}}$.
Let $d_s$ be the planned duration of $S_{\mathrm{c}}$, let $n_s$ be the number of optimizing propositions satisfied in $S_{\mathrm{c}}$.
Let us redefine $t=0$ to be the time when the suffix starts, and let $\bar\BBT^\pi$ be a sequence of length $n_s$ recording the $n_s$ times that the optimizing proposition is satisfied on the first repetition of $S_{\mathrm{c}}$.
Note that, as we consider infinite runs and as the process restarts itself at the beginning of each $S_{\mathrm{c}}$ by means of the synchronization protocol given in Alg.~\ref{alg:syncrun}, we only need to consider the first repetition of $S_{\mathrm{c}}$.
We first define
\begin{equation*}
\begin{aligned}
\underline{T^i} &= \bar\BBT^\pi(i)(1-\rho)\\
\overline{T^i} &= \bar\BBT^\pi(i)(1+\rho)\\
t^{w} &= d_s(1+\rho)
\end{aligned}
\end{equation*}
where, $\underline{T^i}$ and $\overline{T^i}$ are the earliest and latest times that the $i$th optimizing proposition can be satisfied, respectively.
The value $t^w$ is the latest time that the second repetition of $S_{\mathrm{c}}$ can begin.
Then, for $0<i\leq n_s$, the worst-case time between satisfying the $i$th optimizing proposition and the $(i+1)$th optimizing proposition is
\begin{equation}
\label{eqn:tau_1}
\tau^{i,i+1} = 
\begin{cases} 
\overline{T^{i+1}} - \underline{T^i} & \text{if $0<i<n_s$,} \\
t^w + \overline{T^{1}} - \underline{T^{n_s}} & \text{if $i = n_s$}.\\
\end{cases}
\end{equation} 

Next, in the planned paths, multiple robots may simultaneously satisfy the $i$th optimizing proposition.
In the field, these satisfactions will not occur simultaneously.
The maximum amount of time between the first and last of these satisfying instances for the $i$th proposition, for $0<i\leq n_s$, is
\begin{equation}
\label{eqn:tau_2}
\tau^i = \overline{T^{i}} - \underline{T^i}.
\end{equation} 

Finally, using \eqref{eqn:tau_1} and \eqref{eqn:tau_2} we obtain the upper bound on the value of the cost function \eqref{eqn:cost_function} that will be observed during deployment as
\begin{equation}
\label{eqn:cost_max}
\overline{J(\BBT^\pi)} = \max\{\max_i\{\tau^{i,i+1}\},\max_i\{\tau^i\}\}.
\end{equation}

Substituting the definitions for $\overline{T^{i}}$, $\underline{T^{i}}$, and $t^w$ into~\eqref{eqn:tau_1} we obtain $\tau^{i,i+1} =$
\begin{multline*}
\begin{cases} 
\bar\BBT^\pi(i+1)(1+\rho)- \bar\BBT^\pi(i)(1-\rho) & \text{if $0<i<n_s$,} \\
(1+\rho)\big(d_s + \bar\BBT^\pi(1)\big) - \bar\BBT^\pi(n_s)(1-\rho) & \text{if $i = n_s$}\\
\end{cases}
\end{multline*}
But, we have that $ J(\BBT^\pi) \geq \bar\BBT^\pi(i+1) - \bar\BBT^\pi(i)$, and $ J(\BBT^\pi) \geq d_s + \bar\BBT^\pi(1) - \bar\BBT^\pi(n_s)$.
In addition, $\bar\BBT^\pi(1) \leq J(\BBT^\pi)$ and $\bar\BBT^\pi(i) \leq d_s$ for all $i\in\{2,\ldots,n_s\}$.
Using these expressions we obtain
\[
\tau^{i,i+1} \leq  J(\BBT^\pi) + \rho(J(\BBT^\pi)+ 2 d_s).
\]
Similarly, we get
\[
\tau^{i} \leq 2\rho d_s,
\]
and thus $\overline{J(\BBT^\pi)} \leq J(\BBT^\pi) + \rho(J(\BBT^\pi)+ 2 d_s)$.  
\end{proof}

\begin{remark}
In Prop.~\ref{prp:bound_on_opt}, we have provided a conservative bound for ease of presentation.
However, we can calculate an exact bound on the field value of the cost $J(\BBT^\pi)$ using a treatment similar to the proof of Prop~\ref{prp:bound_on_opt}.
\end{remark}

\exampler{\ref{exp:running_example}}{%
For the example we have shown throughout this section, applying Alg. \optrun \cite{SLS-JT-CB-DR:10b} to $\BFR_{ser}$ given in Fig.~\ref{fig:ra} and the formula $\phi_{sync}:= \Always \Event \mathtt{r1P} \andltl \Always \Event \mathtt{r2P} \andltl \Always \Event \pi \andltl \Always \Event \mathtt{Sync}$ results in the optimal run with the prefix
\begin{center}
\scalebox{0.83}{%
\begin{tabular}{c |c c c c c c}
$\BBT$ & 0 & 2 & 2 & 2 & 2 & 3 \\
\hline
\multirow{2}{*}{$r_\RMR^\star$} & ab,ab & ba,bc & ba,bc & ba,bc & ba,bc & ba,cb \\
& (0,0) & (0,0) & (0,0) & (0,0) & (0,0) & (1,0) \\
\hline
$\CL_\BFR(\cdot)$ & $\mathtt{Sync}$ & $\mathtt{r1P}$ & $\mathtt{Sync}$ & $\mathtt{r2P}$ & $\opt$ & \\
\end{tabular}}
\end{center}
and the suffix cycle
\begin{center}
\scalebox{0.83}{%
\begin{tabular}{c |c c c c c c }
$\BBT$& 4 & 4 & 4 & 6 & 6 & 6\\
\hline
\multirow{2}{*}{$r_\RMR^\star$} & ab,ba & ab,ba & ab,ba & ba,ab & ba,ab & ba,ab \\
& (0,0) & (0,0) & (0,0) & (0,0) & (0,0) & (0,0) \\
\hline
$\CL_\BFR(\cdot)$ & $\mathtt{r2P}$ & $\mathtt{Sync}$ & $\opt$ & $\mathtt{r1P}$ & $\mathtt{Sync}$ & $\opt$ \\
\end{tabular}}
\end{center}
which will be repeated an infinite number of times.
In the table above, the rows correspond to the times when transitions occur, the run $r_\RMR^\star$, and the satisfying atomic propositions, respectively.
For this example, $\BBT^\pi = 2,4,6,8,10,\ldots $ and the cost as defined in \eqref{eqn:cost_function} is $J(\BBT^\opt) = 2$.
Furthermore, when the robotic team is deployed in the field, this cost is bounded from above by 2.5 for $\rho_1=\rho_2=0.05$ as given by Prop.~\ref{prp:bound_on_opt}.

Applying Def.~\ref{def:projection_of_runs} to $r_\RMR^\star$ we have the following individual runs:
\begin{center}
\scalebox{0.83}{%
\begin{tabular}{ c |c c c c c c c c}
$\BBT$& 0 & 2 & 3 & 4 & 6& 8 & 10 & \ldots \\
\hline
$r_1^\star$ & a & b &   & a & b & a & b & \ldots \\
\hline
$r_2^\star$ & a & b & c & b & a & b & a & \ldots
\end{tabular}}
\end{center}
Note that, at time $t=3$, the second robot has arrived at $c$ while the first robot is still traveling from $b$ to $a$, therefore the clock of the first robot is not zero at this time, \ie $x_{1}\neq 0$, and $b$ does not appear in $r_{1}^\star$ at time $t=3$.%
}

We finally summarize our approach in Alg.~\ref{alg:robustmultioptrun}, show that this algorithm indeed gives a solution to Prob. \ref{prb:problem} and analyze the overall complexity of our approach.

\begin{algorithm} 
\DontPrintSemicolon 
\SetInd{0.5em}{0.5em}
\KwIn{$m$ $\BFT_i$'s and a global LTL specification $\phi_{sync}$ of form \eqref{eqn:general_formula}.}
\KwOut{A set of robust optimal runs $\{r^{\star}_1,\ldots, r^{\star}_{m}\}$ that satisfies $\phi_{sync}$, minimizes \eqref{eqn:cost_function}, and the bound on the performance of the team in the field.}
\Begin{
	$\phi_{sync} := \varphi \andltl \Always \Event \opt \andltl \Always \Event \mathtt{Sync}$.\;
	\If {$\phi_{sync}$ $\mathrm{is\;trace}\text{-}\mathrm{closed}$} {
		Obtain the region automaton $\BFR$ using \constR \cite{iros2011}.\;
		Obtain $\BFR_{ser}$ using \serializeR.\;
		Find the optimal run $r_\RMR^\star$ applying \optrun \cite{SLS-JT-CB-DR:10b} to $\BFR_{ser}$ and $\phi_{sync}$.\;
		Obtain individual runs from $r_\RMR^\star$ using Def.~\ref{def:projection_of_runs}.\;
		Find the bound on optimality as given in Prop.~\ref{prp:bound_on_opt}.\;
	}
	\Else {
		Abort.\;
	}
}
\caption{{\sc Robust-Multi-Robot-Optimal-Run}\label{alg:robustmultioptrun}}
\end{algorithm}

\begin{proposition}
\label{prp:final}
Alg.~\ref{alg:robustmultioptrun} solves Prob.~\ref{prb:problem}.
\end{proposition}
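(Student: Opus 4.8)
The plan is to verify that Algorithm~\ref{alg:robustmultioptrun} meets the three requirements stated in Problem~\ref{prb:problem}: the produced runs $\{r_1^\star,\ldots,r_m^\star\}$ must (i) be such that $\tilde\omega_{team}$ satisfies $\phi_{sync}$ in the field, (ii) minimize the cost function~\eqref{eqn:cost_function} for the planned model, and additionally we must confirm that a field bound is correctly returned. I would handle these in that order, chaining together the results already established in Sections~\ref{sec:sub:dist}--\ref{sec:sub:soln}.

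First, for field-correctness: the algorithm proceeds past the \texttt{if}-branch only when $\phi_{sync}$ is trace-closed with respect to the distribution $\{\Pi_1,\ldots,\Pi_m\}$ given by the robots' capabilities. By Prop.~\ref{prp:trace_closed_formula}, any run satisfying a trace-closed $\phi_{sync}$ produces a word $\omega_{team}$ that is not a critical word, so every $\tilde\omega_{team}\in[\omega_{team}]$ lies in $L_\RMB$; hence $\phi_{sync}$ is not violated in the field. It remains to argue that the run actually constructed by the algorithm does satisfy $\phi_{sync}$ on the planned model. This follows because \optrun applied to $\BFR_{ser}$ and $\phi_{sync}$ returns a run $r_\RMR^\star$ accepted by the product of $\BFR_{ser}$ with the B\"uchi automaton for $\phi_{sync}$, and by the equivalence result from~\cite{iros2011} (recalled after Def.~\ref{def:projection_of_runs}) the projected individual runs $r_i^\star$ produce exactly the same team word $\omega_{team}$ as $r_\RMR^\star$. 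One point needing care is that serialization (Alg.~\ref{alg:serializeR}) changes the automaton; here I would invoke the Remark following Alg.~\ref{alg:serializeR}: since $\phi_{sync}$ is trace-closed and all serialization orderings lie in the same trace-equivalence class, the choice of serialization order does not affect satisfaction, and working over the alphabet $\Pi$ (rather than $2^\Pi$) is exactly what makes trace-closedness well-defined and the B\"uchi/$\BFR_{ser}$ languages compatible.

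Second, for optimality: \optrun is designed to return a run minimizing~\eqref{eqn:cost_function} over all accepting runs of the input automaton, and the results of~\cite{SLS-JT-CB-DR:10b} guarantee the optimal run is in prefix-suffix form. Because $\BFR_{ser}$ faithfully captures the joint (relative-position) behavior of the team --- this is the content of the region-automaton construction in~\cite{iros2011} --- minimizing the cost over runs of $\BFR_{ser}$ is the same as minimizing it over admissible joint team behaviors, so $r_\RMR^\star$ and its projection achieve the optimal planned value $J(\BBT^\pi)$. Finally, the synchronization protocol Alg.~\ref{alg:syncrun} resets the team at a synchronization point in the suffix cycle (guaranteed to exist since $r_\RMR^\star$ satisfies $\Always\Event\mathtt{Sync}$), and Prop.~\ref{prp:bound_on_opt} then supplies the returned field bound $\overline{J(\BBT^\pi)} \le J(\BBT^\pi) + \rho(J(\BBT^\pi)+2d_s)$, completing all requirements. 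I would close by noting the \texttt{else} branch simply aborts, which is consistent with Problem~\ref{prb:problem} (no robust solution is claimed when $\phi_{sync}$ is not trace-closed).

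The main obstacle, I expect, is not any single hard estimate but rather assembling the correctness argument cleanly across the three automaton transformations (region automaton $\to$ serialized region automaton $\to$ product with the B\"uchi automaton inside \optrun) while keeping track of which alphabet ($\Pi$ versus $2^\Pi$) each object lives over, and making the informal ``equivalence of runs'' and ``$\BFR_{ser}$ captures joint behavior'' claims precise enough that optimality transfers from $\BFR_{ser}$ back to the physical team. Since the excerpt lets me cite all of these as established, the proof itself should be short — essentially a sequence of "by Prop.~X / by~\cite{...}" invocations — but the bookkeeping is where an error would most easily creep in.
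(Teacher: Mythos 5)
Your proposal is correct and follows essentially the same route as the paper's own proof, which likewise just chains the satisfaction/optimality guarantees from \cite{iros2011} and \optrun with Prop.~\ref{prp:trace_closed_formula} for field correctness; your version is simply more explicit about the serialization step, the alphabet bookkeeping, and the role of Prop.~\ref{prp:bound_on_opt}. No gaps.
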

\begin{proof}
Note that Alg.~\ref{alg:robustmultioptrun} combines all steps outlined in this section.
The planned word $\omega_{team}$ generated by the entire team satisfies $\phi_{sync}$ and minimizes \eqref{eqn:cost_function}, as shown in \cite{iros2011}.
Furthermore, since $\phi_{sync}$ is trace-closed, the optimal satisfying run is guaranteed not to violate $\phi_{sync}$ in the field due to timing errors as given in Prop.~\ref{prp:trace_closed_formula}.
Therefore, $\{r^{\star}_1,\ldots, r^{\star}_{m}\}$ as obtained from Alg.~\ref{alg:robustmultioptrun} is the solution to Prob.~\ref{prb:problem}.
\end{proof}

\begin{proposition}
\label{prp:complexity}
For the case where a group of $m$ identical robots are expected to satisfy an LTL specification $\phi$ in a common environment with $\Delta$ edges and a largest edge weight of $W$, the worst-case complexity of Alg.~\ref{alg:robustmultioptrun} is $O((\Delta \cdot W)^{3m}\cdot2^{O(|\phi|)})$.
\end{proposition}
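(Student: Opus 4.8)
The plan is to bound the cost of each line in Alg.~\ref{alg:robustmultioptrun} separately, express everything in terms of $\Delta$, $W$, $m$, and $|\phi|$, and then take the dominant term. First I would bound the size of the region automaton $\BFR$ produced by \constR. Each state of $\BFR$ records, for each robot $i$, a source--target edge pair $q_iq_i'$ (of which there are at most $\Delta$ choices, since the robots are identical) together with a clock value $x_i$ that is a nonnegative integer strictly less than the edge weight $w_i(q_i,q_i')\le W$. Hence each robot contributes at most $\Delta\cdot W$ local configurations, and $|\CQ_\RMR| = O((\Delta\cdot W)^m)$. The transition relation $\delta_\RMR$ is at most quadratic in $|\CQ_\RMR|$, so $\BFR$ has size $O((\Delta\cdot W)^{2m})$.

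Next I would track how this size propagates through the remaining steps. Serialization via Alg.~\ref{alg:serializeR} replaces each state by at most $|\Pi|$ copies, so $\BFR_{ser}$ is still $O((\Delta\cdot W)^{m})$ in states and $O((\Delta\cdot W)^{2m})$ overall, with only a polynomial blow-up in $|\Pi|\le|\phi|$. The trace-closedness check, adapted from~\cite{DP-TW-PW}, operates on the B\"uchi automaton for $\phi$, which has size $2^{O(|\phi|)}$, and contributes a factor of this order. The heart of the bound is the call to \optrun from~\cite{SLS-JT-CB-DR:10b}: that algorithm takes the product of $\BFR_{ser}$ with the B\"uchi automaton for $\phi$ and runs an optimal-cycle computation on it. The product has size $O((\Delta\cdot W)^{2m})\cdot 2^{O(|\phi|)}$, and the optimal-run computation in~\cite{SLS-JT-CB-DR:10b} is known to run in time cubic (via an all-pairs-shortest-paths / minimum-mean-cycle style subroutine) in the size of this product graph, giving $O\big((\Delta\cdot W)^{6m}\cdot 2^{O(|\phi|)}\big)$. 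Finally, projecting $r_\RMR^\star$ back to the individual $\BFT_i$'s (Def.~\ref{def:projection_of_runs}) and computing the optimality bound of Prop.~\ref{prp:bound_on_opt} are linear in the length of the optimal run, hence dominated.

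Collecting terms, the \optrun step dominates, and the worst-case running time is $O\big((\Delta\cdot W)^{3m}\cdot 2^{O(|\phi|)}\big)$ once one is careful about whether the relevant product-graph size is $(\Delta\cdot W)^{m}$ (states) or its square (edges); the stated exponent $3m$ reflects the cubic dependence of the optimal-cycle routine on the \emph{number of states} of the product. I expect the main obstacle to be precisely this bookkeeping: pinning down exactly which quantity \optrun's complexity is cubic in, and making sure the serialization, synchronization-state insertion, and B\"uchi product do not secretly contribute an extra factor in the exponent of $\Delta\cdot W$. One must also confirm that treating the robots as identical is what collapses a potential $\prod_i \Delta_i W_i$ into $(\Delta\cdot W)^m$, and that the clock values really are bounded by $W$ rather than by a sum of edge weights along a path (which would ruin the bound); this follows from clause~(c) of Def.~\ref{def:RA}, where clocks reset to $0$ at every transition and are only ever incremented by the residual time $w_i(q_i,q_i')-x_i \le W$ of the edge currently being traversed.
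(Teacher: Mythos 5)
Your proposal is correct and follows essentially the same route as the paper: bound the number of states of the region automaton by $O((\Delta\cdot W)^m)$ (the paper quotes the exact count $\bigl(\prod_i|\delta_i|\bigr)\bigl(\prod_i W_i-\prod_i(W_i-1)\bigr)+1$ from the earlier work, which your per-robot edge-times-clock argument reproduces up to constants), then invoke the $O(|T|^3\cdot 2^{O(|\phi|)})$ complexity of \optrun with $|T|=O((\Delta\cdot W)^m)$. The one place you hedge --- whether the cubic is in the number of states or in the full product-graph size, which briefly produces a spurious $6m$ exponent --- is resolved exactly as you guess: the cited complexity of \optrun is cubic in the number of states of the input transition system, with the B\"uchi contribution absorbed into the $2^{O(|\phi|)}$ factor, so the exponent is $3m$.
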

\begin{proof}
From \cite{iros2011}, the number of states of the region automaton $\BFR$ is bounded by
\[
\left(\prod_{i=1}^m |\delta_i|\right)\left(\prod_{i=1}^m W_i - \prod_{i=1}^m (W_i-1)\right)+1
\]
where $m$ is number of robots and $W_i$ is largest edge weight in TS $\BFT_i$ of robot $i$.
Then, for the above mentioned case, the worst-case size of the region automaton is $O((\Delta \cdot W)^m)$ .
In \cite{SLS-JT-CB-DR:10b-IJRR}, the authors give the worst-case complexity of the \optrun algorithm as $O(|T|^3\cdot2^{O(|\phi|)})$ where $|T|$ is the number of states of the input transition system and $|\phi|$ is the length of the LTL specification.
Therefore, the worst-case complexity of Alg.~\ref{alg:robustmultioptrun} becomes $O((\Delta \cdot W)^{3m}\cdot2^{O(|\phi|)})$.
\end{proof}

\section{Implementation and Case Studies \label{sec:experiments}}

We implemented Alg.~\ref{alg:robustmultioptrun} in objective-C as the software package {\sc LTL Robust Optimal Multi-robot Planner (LROMP)} and used it in conjunction with our earlier \optrun \cite{SLS-JT-CB-DR:10b} algorithm to obtain robust and optimal trajectories for robots performing persistent data gathering missions in a road network environment.
The software package, available at \url{http://hyness.bu.edu/Software.html}, utilizes the dot tool \cite{dotTool} to visualize transition systems and the \optrun algorithm uses the LTL2BA software \cite{LTL2BA} to convert LTL specifications to B\"uchi automata.
Following the steps detailed in Sec.\ref{sec:solution}, the software first creates the serialized region automaton with synchronization states $\BFR_{ser}$ using $\BFT_i$s defined by the user and exports an M-file which defines $\BFR_{ser}$ in Matlab.
Next, $\phi_{sync}$ is checked for trace-closedness, after which \optrun algorithm is executed in Matlab to find the optimal run $r_\RMR^\star$ on $\BFR_{ser}$.
Finally, an upper bound on the field value of the cost function \eqref{eqn:cost_function} is computed and $r_\RMR^\star$ is projected to individual $\BFT_i$, $i\!=\!1,\ldots,m$, to obtain the solution to Prob.~\ref{prb:problem}.

\begin{figure}
\subfloat[]{\includegraphics[width=0.7\linewidth,angle=270]{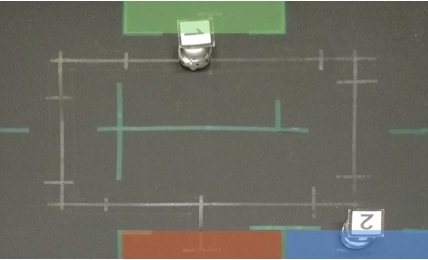}\label{}}\hspace{0.05in}
\subfloat[]{\raisebox{-0.15in}{\includegraphics[width=0.59\linewidth,angle=270]{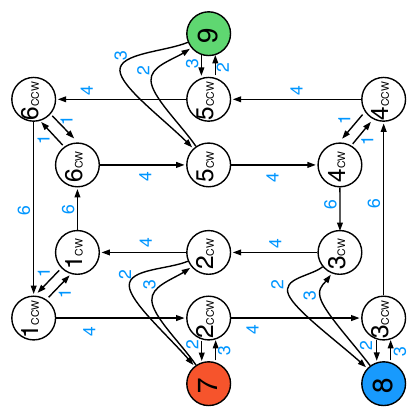}}\label{fig:new_rule_ts}}
\caption{ (a) Road network used in the experiments (b) The model of the road network with weights shown in blue. 1 time unit in this model corresponds to 3 seconds. The red and blue regions are data gathering locations of robots 1 and 2, respectively and the green region is the common upload location. CW and CCW stand for clockwise and counter-clockwise, respectively.}
\label{fig:road_network}
\end{figure}
 
Fig.~\ref{fig:road_network} illustrates our experimental platform, which is a road network consisting of roads, intersections, and task locations. The figure also shows the transition system that models the motion of the robots on this road network where 1 time unit corresponds to 3 seconds.
In the following, the transition systems $\BFT_{i}$ are identical except for their initial states and the sets of propositions that can be satisfied at states.

In our experiments, we consider a persistent monitoring task where two robots with deviation values of $\rho_1=0.09$, $\rho_2=0.04$ repeatedly gather and upload data and the maximum time in between any two data uploads must be minimized.
We require robots 1 and 2 to gather data at 7 and 8 in Fig.~\ref{fig:road_network}, respectively and upload the data at 9. We define
$\Pi~\!=\!~\{\mathtt{R1Gather},\;\mathtt{R1Upload},\;\mathtt{R2Gather},\;\mathtt{R2Upload},\;\mathtt{Upload},$ $\mathtt{Sync}\}$
and assign the atomic propositions as
\begin{align*}
\mathcal L_{1}(7)&=\{\mathtt{R1Gather}\}, \CL_1(9)=\{\mathtt{R1Upload},\mathtt{Upload}\}\\
\mathcal L_{2}(8)&=\{\mathtt{R2Gather}\}, \CL_2(9)=\{\mathtt{R2Upload},\mathtt{Upload}\}.
\end{align*}
where $\mathtt{Upload}$ is set as the optimizing proposition ($\pi$ as in formula \eqref{eqn:general_formula}) due to the task specification.
Next, we forbid data uploads unless robots have something to upload using the LTL formula
\begin{multline*}
\varphi = \Always(\mathtt{R1Upload} \Implies \Next(\Not \mathtt{R1Upload}\ \Until\ \mathtt{R1Gather})) \\
\andltl \Always(\mathtt{R2Upload} \Implies \Next(\Not \mathtt{R2Upload}\ \Until\ \mathtt{R2Gather})).
\end{multline*}
Our overall LTL formula in the form of \eqref{eqn:general_formula} is 
\begin{equation}
\label{eqn:experiments}
\phi_{sync}=\varphi \andltl \Always\,\Event\, \mathtt{Upload} \andltl \Always\,\Event\,\mathtt{Sync}.
\end{equation}
 
\begin{figure}
\centering
\includegraphics[width=0.7\linewidth]{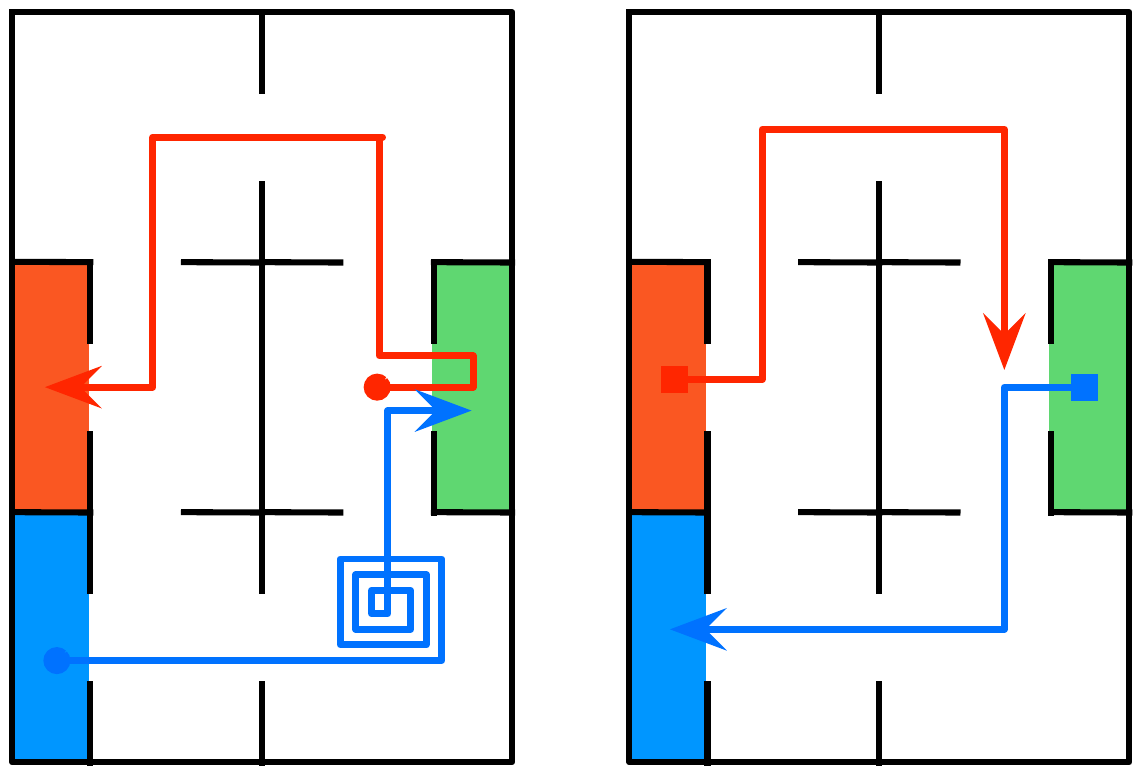}
\caption{Team trajectories used in the experiments. The red and blue regions are data gathering locations of robots 1 and 2, respectively and the green region is the common upload location. The circles on the left show the sync point, \ie the beginning of the suffix cycle, on the trajectories of the robots.}
\label{fig:run}
\end{figure}
Running our algorithms on an iMac i5 quad-core computer, we obtain the robust optimal trajectory as illustrated in Fig.~\ref{fig:run}.
The algorithm ran for 35 minutes, and the region automaton $\BFR_{ser}$ had 5224 states.
The value of the cost function was 19 time units (57 seconds) with an upper-bound of 27.55 time units (82.65 seconds), meaning that the maximum time in between data uploads would be less than 82.65 seconds in the field.
This result was experimentally verified in our robotic test-bed and the maximum time in between data uploads was measured to be 64 seconds (21.3 time units) during a run of 13 minutes.
In order to confirm and demonstrate the effectiveness of our approach, we executed the same trajectory without any synchronization.
After approximately 6.5 minutes, the maximum time in between data uploads was measured to be 92 seconds (30.7 time units), much worse than what is provided by our approach.
Our video submission accompanying the paper displays the robot trajectories for both cases.

It is interesting to note that, in the optimal solution the second robot spends extra time spinning between states $4_{\RMC\RMW}$ and $4_{\RMC\RMC\RMW}$ (Figs.~\ref{fig:new_rule_ts}, \ref{fig:run}).
This behavior is actually time-wise optimal as it decreases the maximum time between successive satisfying instances of the optimizing proposition.
 
\section{Conclusions \label{sec:conclusions}}
In this paper we presented and experimentally evaluated a method for planning robust optimal trajectories for a team of robots that satisfy a common temporal logic mission specification.
Our method is robust to uncertainties in the traveling times of each robot, and thus has practical value in applications where multiple robots must perform a series of tasks collectively in a common environment.
We considered trace-closed temporal logic formulas with optimizing and synchronizing propositions that must be repeatedly satisfied.
In the absence of timing errors, the motion plan delivered by our method is optimal in the sense that it minimizes the maximum time between satisfying instances of the optimizing proposition.
If the traveling times observed in the field deviate from those given by the transition systems of the robots, our method guarantees that the mission specification is never violated and provides an upper bound on the ratio between the performance in the field and the optimal performance. 

\section*{Acknowledgments}
We thank Jennifer Marx at Boston University for her work on the experimental platform.

\bibliographystyle{IEEEtran} 
\bibliography{references}

\begin{thebibliography}{10}
\providecommand{\url}[1]{#1}
\csname url@samestyle\endcsname
\providecommand{\newblock}{\relax}
\providecommand{\bibinfo}[2]{#2}
\providecommand{\BIBentrySTDinterwordspacing}{\spaceskip=0pt\relax}
\providecommand{\BIBentryALTinterwordstretchfactor}{4}
\providecommand{\BIBentryALTinterwordspacing}{\spaceskip=\fontdimen2\font plus
\BIBentryALTinterwordstretchfactor\fontdimen3\font minus
  \fontdimen4\font\relax}
\providecommand{\BIBforeignlanguage}[2]{{%
\expandafter\ifx\csname l@#1\endcsname\relax
\typeout{** WARNING: IEEEtran.bst: No hyphenation pattern has been}%
\typeout{** loaded for the language `#1'. Using the pattern for}%
\typeout{** the default language instead.}%
\else
\language=\csname l@#1\endcsname
\fi
#2}}
\providecommand{\BIBdecl}{\relax}
\BIBdecl

\bibitem{Loizou04}
S.~G. Loizou and K.~J. Kyriakopoulos, ``Automatic synthesis of multiagent
  motion tasks based on {LTL} specifications,'' in \emph{{IEEE} Conf. on
  Decision and Control}, Paradise Island, Bahamas, 2004, pp. 153--158.

\bibitem{HKG-GEF-GJP:09}
H.~Kress-Gazit, G.~E. Fainekos, and G.~J. Pappas, ``Temporal logic-based
  reactive mission and motion planning,'' \emph{IEEE Transactions on Robotics},
  vol.~25, no.~6, pp. 1370--1381, 2009.

\bibitem{TW-UT-RMM:10}
T.~Wongpiromsarn, U.~Topcu, and R.~M. Murray, ``Receding horizon control for
  temporal logic specifications,'' in \emph{Hybrid systems: Computation and
  Control}, Stockholm, Sweden, 2010, pp. 101--110.

\bibitem{AB-LEK-MYV:10}
A.~Bhatia, L.~E. Kavraki, and M.~Y. Vardi, ``Motion planning with hybrid
  dynamics and temporal goals,'' in \emph{{IEEE} Conf. on Decision and
  Control}, 2010, pp. 1108--1115.

\bibitem{LB-OS-JC-KG-SL:11}
L.~Bobadilla, O.~Sanchez, J.~Czarnowski, K.~Gossman, and S.~LaValle,
  ``Controlling wild bodies using linear temporal logic,'' in \emph{Proceedings
  of Robotics: Science and Systems}, Los Angeles, CA, USA, June 2011.

\bibitem{VW86}
M.~Y. Vardi and P.~Wolper, ``An automata-theoretic approach to automatic
  program verification,'' in \emph{Logic in Computer Science}, 1986, pp.
  322--331.

\bibitem{Holzmann97}
G.~Holzmann, ``The model checker {SPIN},'' \emph{IEEE Transactions on Software
  Engineering}, vol.~25, no.~5, pp. 279--295, 1997.

\bibitem{SLS-JT-CB-DR:10b-IJRR}
S.~L. Smith, J.~T\r{u}mov\'{a}, C.~Belta, and D.~Rus, ``Optimal path planning
  for surveillance with temporal logic constraints,'' \emph{International
  Journal of Robotics Research}, vol.~30, pp. 1695--1708, Dec. 2011.

\bibitem{iros2011}
A.~Ulusoy, S.~L. Smith, X.~C. Ding, C.~Belta, and D.~Rus, ``Optimal path
  planning for surveillance with temporal logic constraints,'' pp. 3087--3092,
  Sep. 2011.

\bibitem{MK-CB:08}
M.~Kloetzer and C.~Belta, ``Automatic deployment of distributed teams of robots
  from temporal logic specifications,'' \emph{IEEE Transactions on Robotics},
  vol.~26, no.~1, pp. 48--61, 2010.

\bibitem{Milner89}
R.~Milner, \emph{Communication and concurrency}.\hskip 1em plus 0.5em minus
  0.4em\relax Prentice-Hall, 1989.

\bibitem{AP:99}
A.~Puri, ``An undecidable problem for timed automata,'' \emph{Discrete Event
  Dynamic Systems}, vol.~9, no.~2, pp. 135--146, 2000.

\bibitem{AP:00}
------, ``Dynamical properties of timed automata,'' \emph{Discrete Event
  Dynamic Systems}, vol.~10, no. 1-2, pp. 87--113, 2000.

\bibitem{HS04}
L.~C. G. J.~M. Habets and J.~H. van Schuppen, ``A control problem for affine
  dynamical systems on a full-dimensional polytope,'' \emph{Automatica},
  vol.~40, pp. 21--35, 2004.

\bibitem{YC-XCD-CB:11}
Y.~Chen, X.~C. Ding, and C.~Belta, ``A formal approach to the deployment of
  distributed robotic teams,'' \emph{IEEE Transactions on Robotics}, 2011, to
  appear.

\bibitem{SLS-JT-CB-DR:10b}
S.~L. Smith, J.~T\r{u}mov\'{a}, C.~Belta, and D.~Rus, ``Optimal path planning
  under temporal logic constraints,'' in \emph{IEEE/RSJ Int. Conf. on
  Intelligent Robots \& Systems}, Taipei, Taiwan, Oct. 2010, pp. 3288--3293.

\bibitem{DP-TW-PW}
D.~Peled, T.~Wilke, and P.~Wolper, ``An algorithmic approach for checking
  closure properties of temporal logic specifications and omega-regular
  languages,'' \emph{Theor. Comput. Sci.}, vol. 195, no.~2, pp. 183--203, 1998.

\bibitem{dotTool}
``Graphviz - graph visualization software,'' \url{http://www.graphviz.org/}.

\bibitem{LTL2BA}
``{LTL2BA},'' \url{http://www.lsv.ens-cachan.fr/~gastin/ltl2ba/index.php}.

\end{thebibliography}

\end{document}